\documentclass[12pt]{article}
\usepackage{amsmath, amssymb, amsfonts}
\usepackage{amsthm}
\usepackage{geometry}
\usepackage{cite}
\usepackage[colorlinks=true,linkcolor=blue,citecolor=blue,urlcolor=blue]{hyperref}

\numberwithin{equation}{section}

\newtheorem{theorem}{Theorem}[section]

\newtheorem{proposition}{Proposition}[section]
\newtheorem{corollary}{Corollary}[section]

\theoremstyle{definition}
\newtheorem{definition}{Definition}[section]

\title{Necessary and sufficient conditions for universality of Kolmogorov--Arnold networks}

\author{Vugar E. Ismailov\thanks{E-mail: \texttt{vugaris@gmail.com}, \texttt{vugaris@mail.ru}.}}

\date{}

\begin{document}

\maketitle

\begin{abstract}
We analyze the universal approximation property of
Kolmogorov--Arnold Networks (KANs) in terms of their edge functions.
If these functions are all affine, then universality clearly fails. 
How many non-affine functions are needed, in addition to affine ones, 
to ensure universality? We show that a single one suffices.
More precisely, we prove that deep KANs in which all edge functions are
either affine or equal to a fixed continuous function $\sigma$ are dense
in $C(K)$ for every compact set $K\subset\mathbb{R}^n$ if and only if
$\sigma$ is non-affine. In contrast, for KANs with exactly two hidden
layers, universality holds if and only if $\sigma$ is nonpolynomial.
We further show that the full class of affine functions is not required;
it can be replaced by a finite set without affecting universality. 
In particular, in the nonpolynomial case, a fixed family of five 
affine functions suffices when the depth is arbitrary.
More generally, for every continuous non-affine
function $\sigma$, there exists a finite affine family $A_\sigma$ such
that deep KANs with edge functions in $A_\sigma\cup\{\sigma\}$ remain
universal. We also prove that KANs with the spline-based edge parameterization
introduced by Liu et al.~\cite{Liu2024} are universal approximators in the
classical sense, even when the spline degree and knot sequence are fixed
in advance.

\medskip
\noindent\textbf{Keywords:}
Kolmogorov--Arnold representation theorem; Kolmogorov--Arnold Network; edge function;
universal approximation property; deep network; multilayer perceptron; 
affine function; nonpolynomial function.

\medskip
\noindent\textbf{2020 Mathematics Subject Classification:}
26B40, 41A30, 41A63, 68T07
\end{abstract}

\section{Introduction}

Kolmogorov--Arnold Networks (KANs) are a neural network architecture
recently introduced in the machine learning literature \cite{Liu2024}.
In contrast to classical multilayer perceptrons (MLPs), where a nonlinear
activation function is applied at each node and each connection is assigned
a scalar weight, KANs follow a different design: univariate functions,
referred to here as edge functions, are assigned to edges, while nodes
perform only summation. This leads naturally to representations of
multivariate functions as compositions of univariate functions combined
by addition.

From a mathematical point of view, this construction is closely related to the 
Kolmo\-gorov--Arnold representation theorem. In its classical form, 
this theorem asserts that every continuous function
\[
f:[0,1]^n \to \mathbb{R}
\]
admits a representation of the form
\begin{equation}\label{eq:kart_intro}
f(x_1,\dots,x_n)
=
\sum_{k=1}^{2n+1}
\Phi_k\!\left(
\sum_{j=1}^{n}\psi_{kj}(x_j)
\right),
\end{equation}
where $\psi_{kj}:\mathbb{R}\to\mathbb{R}$ are fixed continuous functions 
and $\Phi_k:\mathbb{R}\to\mathbb{R}$ are continuous functions depending 
on $f$ (see \cite{Kolmogorov1957,Arnold1959}). This theorem has been revisited, 
refined, and extended in many directions (see, e.g., \cite[Chapter~1]{Khavinson1997}, 
\cite[Chapter~4]{Ismailov2021} and the references therein). 
It should be noted that representation \eqref{eq:kart_intro} remains
valid not only for continuous functions, but also for all discontinuous
functions $f$, with the same inner functions $\psi_{kj}$ as in the
continuous case and with outer functions $\Phi_k$ depending on $f$
(see \cite{Ismailov2023}). Moreover, the inner functions $\psi_{kj}(x_j)$ can
be chosen in the form $\lambda_j \psi(x_j+\epsilon k)$, and the outer
functions $\Phi_k$ can be replaced by a single function $\Phi$, both in
the continuous case \cite{Lorentz1962,Sprecher1965} and in the
discontinuous case \cite{Ismayilova2024}.

The structure of \eqref{eq:kart_intro} is important: it involves exactly
two stages of summation and compositions of univariate functions.
First, each inner sum $\sum_{j=1}^n \psi_{kj}(x_j)$ combines univariate
functions of the independent variables. Then an outer summation aggregates
the resulting expressions through the functions $\Phi_k$. In this way, the
Kolmogorov--Arnold representation can be viewed as a two-stage construction
built from univariate functions and addition.

Kolmogorov--Arnold Networks can be understood as an iteration of this construction. 
While the classical theorem involves two summation stages, 
KANs allow several such stages arranged in layers. Each layer applies sums of 
univariate functions to the outputs of the previous layer, producing a deeper 
composition. In this sense, KANs repeat the basic structure of \eqref{eq:kart_intro} 
across multiple layers, which explains the connection between the Kolmogorov--Arnold 
theorem and these networks.

We now give a definition of Kolmogorov--Arnold Networks, following \cite{Liu2024}.

\begin{definition}
Let $x=(x_1,\dots,x_{n_0}) \in \mathbb{R}^{n_0}$. A \emph{Kolmogorov--Arnold Network} 
with $L$ hidden layers is defined recursively by
\begin{align}
h^{(0)}_j(x) &= x_j, \qquad j=1,\dots,n_0, \label{eq:input_intro}\\
h^{(\ell)}_k(x) &= \sum_{j=1}^{n_{\ell-1}} \psi^{(\ell)}_{kj}\bigl(h^{(\ell-1)}_j(x)\bigr),
\qquad \ell=1,\dots,L,\quad k=1,\dots,n_\ell, \label{eq:hidden_intro}\\
y_i(x) &= \sum_{k=1}^{n_L} \phi_{ik}\bigl(h^{(L)}_k(x)\bigr),
\qquad i=1,\dots,m. \label{eq:output_intro}
\end{align}
Here $\psi^{(\ell)}_{kj}:\mathbb{R}\to\mathbb{R}$ and
$\phi_{ik}:\mathbb{R}\to\mathbb{R}$ are univariate functions,
called \emph{edge functions}.
\end{definition}

In this architecture, activation functions are associated with edges, whereas nodes 
perform only summation. Each $k$-th node of the $\ell$-th layer computes
\[
h^{(\ell)}_k = \sum_{j=1}^{n_{\ell-1}} \psi^{(\ell)}_{kj} \circ h^{(\ell-1)}_j,
\]
which combines additive structure with compositions of univariate functions. 
The network has $L$ hidden layers; including input and output layers, the total depth is $L+2$.

By recursively substituting \eqref{eq:hidden_intro} into \eqref{eq:output_intro}, 
one obtains the fully expanded nested representation
\begin{equation*}
\begin{aligned}
y_i(x)
&=
\sum_{k_L=1}^{n_L}
\phi_{ik_L}
\Biggl(
\sum_{k_{L-1}=1}^{n_{L-1}}
\psi^{(L)}_{k_L k_{L-1}}
\Biggl(
\sum_{k_{L-2}=1}^{n_{L-2}}
\psi^{(L-1)}_{k_{L-1} k_{L-2}}
\Biggl(
\cdots
\sum_{k_1=1}^{n_1}
\psi^{(2)}_{k_2 k_1}
\Bigl(
\sum_{j=1}^{n_0}
\psi^{(1)}_{k_1 j}(x_j)
\Bigr)
\cdots
\Biggr)
\Biggr)
\Biggr).
\end{aligned}
\end{equation*}
This formula shows that a KAN represents functions as nested compositions of 
sums of univariate functions, with each additional layer introducing a new 
level of composition while preserving the additive structure.

For clarity, we write explicitly the case $L=2$:
\begin{align*}
h^{(1)}_{k_1}(x)
&=
\sum_{j=1}^{n_0}
\psi^{(1)}_{k_1 j}(x_j),
\\
h^{(2)}_{k_2}(x)
&=
\sum_{k_1=1}^{n_1}
\psi^{(2)}_{k_2 k_1}\bigl(h^{(1)}_{k_1}(x)\bigr),
\\
y_i(x)
&=
\sum_{k_2=1}^{n_2}
\phi_{i k_2}\bigl(h^{(2)}_{k_2}(x)\bigr),
\end{align*}
so that
\[
y_i(x)
=
\sum_{k_2=1}^{n_2}
\phi_{i k_2}
\Biggl(
\sum_{k_1=1}^{n_1}
\psi^{(2)}_{k_2 k_1}
\Bigl(
\sum_{j=1}^{n_0}
\psi^{(1)}_{k_1 j}(x_j)
\Bigr)
\Biggr).
\]
The cases $L=3,4$, and so on, admit completely analogous expansions.

Each layer may also be viewed as an operator
\[
T^{(\ell)} : \mathbb{R}^{n_{\ell-1}} \to \mathbb{R}^{n_\ell},
\qquad
(T^{(\ell)}h)_k = \sum_{j=1}^{n_{\ell-1}} \psi^{(\ell)}_{kj}(h_j).
\]
The output layer defines an operator
\[
\Phi : \mathbb{R}^{n_L} \to \mathbb{R}^m,
\qquad
(\Phi h)_i = \sum_{k=1}^{n_L} \phi_{ik}(h_k),
\]
so that a deep KAN can be written concisely as
\[
y = \Phi \circ T^{(L)} \circ \cdots \circ T^{(1)} (x).
\]

It is important to note that when $L=1$, the KAN reduces to
\[
y_i(x)=\sum_{k=1}^{n_1}\phi_{ik}\!\left(\sum_{j=1}^{n_0}\psi_{kj}(x_j)\right),
\qquad i=1,\dots,m.
\]
In the scalar-output case $m=1$, this coincides exactly with the 
Kolmogorov--Arnold representation \eqref{eq:kart_intro}.

It should be noted that a model closely related to present-day KANs
appeared earlier in 2003, where a Kolmogorov-type
superposition is employed, with both inner and outer univariate
functions represented by adaptive cubic splines (see \cite{IgelnikParikh2003}). 
Other related interpretations, including connections with the Urysohn operator 
and its discrete form, which are viewed as generalized additive models (GAMs), 
have been discussed in \cite{PolarPoluektov2021,PoluektovPolar2025}. The latter 
works interpret the Kolmogorov--Arnold representation as a composition (tree) 
of discrete Urysohn operators, referred to as a Urysohn tree.

In recent years, there has been growing interest in the theoretical study of
Kolmo\-gorov--Arnold Networks. Various topics related to KANs and their variants,
including approximation rates
\cite{LiuChatziLai2025,KratsiosKimFuruya2025},
representation and specific universality results
\cite{Gleyzer2025,Chiu2026,Toscano2025},
expressivity \cite{Wang2024}, and learning-theoretic and optimization
aspects \cite{ZhangZhou2025,Eshtehardian2026}, have been investigated. 
For a general overview of Kolmogorov--Arnold Networks and their variants,
we refer the reader to \cite{Somvanshi2025,Noorizadegan2025} 
and the references therein.

While recent progress has been made in understanding the approximation
properties of KAN-type models, a systematic study of universality in terms
of edge functions is still lacking. The aim of this paper is to
address this question for KANs.

Our main result provides a complete characterization of universality for
KANs in which all edge functions are either affine or equal to a fixed
continuous function $\sigma$: deep KANs of this type are dense in $C(K)$
for every compact set $K\subset\mathbb{R}^n$ if and only if $\sigma$ is
non-affine (Theorem~\ref{thm:kan_iff}). In addition, we show that for
KANs with exactly two hidden layers, universality holds if and only if
$\sigma$ is nonpolynomial (Theorem~\ref{thm:kan_fixed_sigma_affine}).
These results clarify the conditions under which universality holds in
the deep and two-hidden-layer settings.

As a consequence, we also obtain a universality result for the class of edge 
functions used in the original KAN model \cite{Liu2024}. More precisely, 
we show that KANs whose edge functions are given by linear combinations of a 
fixed base function and spline functions are dense in $C(K)$ 
(Corollary~\ref{cor:kan_original_edge}). In this result, both the 
degree of the splines and the number of knot points can be fixed in advance.

We further prove that the full family of affine functions can be replaced 
by a finite one without affecting universality. In particular, in the 
nonpolynomial case, a fixed family of five affine functions 
suffices when the depth is arbitrary
(Theorem~\ref{thm:nonpoly_five_affine}). More generally, for every 
continuous non-affine function $\sigma$, there exists a finite affine 
family $A_\sigma$ such that deep KANs with edge functions in 
$A_\sigma\cup\{\sigma\}$ remain universal 
(Theorem~\ref{thm:finite_affine_nonaffine}).

\section{A simple universality result based on multilayer perceptrons}

This section presents a simple observation that serves as a starting point
for the subsequent sections. We show that KANs inherit the universal
approximation property from classical multilayer perceptrons by
identifying MLPs as a special case of KANs under a suitable restriction
on the edge functions.

We consider the single-hidden-layer KAN representation
\begin{equation}\label{eq:kan_uat}
F(x)
=
\sum_{k=1}^{N}
\phi_k\!\left(
\sum_{j=1}^{n}
\psi_{kj}(x_j)
\right),
\qquad x\in\mathbb{R}^n.
\end{equation}

\begin{proposition}\label{prop:mlp_in_kan}
Every function representable by a single-hidden-layer multilayer 
perceptron can be written in the form \eqref{eq:kan_uat}.
\end{proposition}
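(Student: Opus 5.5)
The plan is to match the two formulas term by term. A single-hidden-layer MLP with $N$ hidden units and activation $\rho:\mathbb{R}\to\mathbb{R}$ computes
\[
G(x)=\sum_{k=1}^{N} c_k\,\rho\bigl(w_k\cdot x+b_k\bigr)
=\sum_{k=1}^{N} c_k\,\rho\Bigl(\sum_{j=1}^{n} w_{kj}x_j+b_k\Bigr),
\]
with $c_k,b_k\in\mathbb{R}$ and $w_k=(w_{k1},\dots,w_{kn})\in\mathbb{R}^n$. A possible output bias $c_0$ will be handled separately at the end. The idea is that the inner sum in \eqref{eq:kan_uat} can absorb the affine pre-activation, and the outer edge function can absorb both the activation and the output weight.

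For the inner edge functions, I will split the bias $b_k$ evenly across the $n$ input edges:
\[
\psi_{kj}(t)=w_{kj}t+\frac{b_k}{n},\qquad j=1,\dots,n.
\]
Alternatively, one could attach all of $b_k$ to the edge $j=1$. Either way,
\[
\sum_{j=1}^n \psi_{kj}(x_j)=w_k\cdot x+b_k .
\]

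For the outer edge functions, I will set $\phi_k(s)=c_k\,\rho(s)$. Substituting both choices into \eqref{eq:kan_uat} gives exactly $F=G$ on all of $\mathbb{R}^n$.

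If the MLP has an output bias $c_0$, I will absorb it into one outer function, for example by replacing $\phi_1$ with $\phi_1+c_0$. This is legitimate because the edge functions in \eqref{eq:kan_uat} are arbitrary univariate functions, so constants can be added freely. The identity $F=G$ then still holds exactly.

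There is no genuine obstacle here; the only step needing care is the bookkeeping of the biases, and it is resolved by the freedom to put constants into individual edge functions. The remark I would add is that the argument uses only affine inner edges and scaled copies of the activation as outer edges. This restricted structure is what the later sections will exploit, and it allows the classical universality of MLPs to transfer immediately to KANs.
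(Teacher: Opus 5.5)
Your proposal is correct and is essentially the paper's proof: you take affine inner edges $\psi_{kj}(t)=w_{kj}t+\beta_{kj}$ with $\sum_j\beta_{kj}=b_k$ (your $\beta_{kj}=b_k/n$ is one such choice) and outer edges $\phi_k=c_k\sigma$ (your $\rho$). Your extra handling of an output bias $c_0$ is valid for the proposition as stated, since the paper's MLPs carry no output bias; note, though, that $\phi_1+c_0$ is no longer a scaled copy of the activation, so your closing remark about the restricted edge structure holds only in the bias-free case.
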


\begin{proof}
Let
\[
G(x)=\sum_{k=1}^{N} c_k\,\sigma(w_k\cdot x+b_k),
\]
be a standard single-hidden-layer MLP, where $\sigma:\mathbb{R}\to\mathbb{R}$ 
is an activation function, $w_k=(w_{k1},\dots,w_{kn})\in\mathbb{R}^n$, 
and $c_k,b_k\in\mathbb{R}$.

For each $k=1,\dots,N$, define edge functions
\[
\psi_{kj}(t)=w_{kj}t+\beta_{kj}, \qquad j=1,\dots,n,
\]
where the constants $\beta_{kj}$ are chosen so that
\[
\sum_{j=1}^n \beta_{kj}=b_k.
\]
Also define
\[
\phi_k(t)=c_k\,\sigma(t).
\]
Then
\[
\sum_{j=1}^{n}\psi_{kj}(x_j)
=
\sum_{j=1}^{n} w_{kj}x_j + b_k
=
w_k\cdot x+b_k,
\]
and therefore
\[
F(x)
=
\sum_{k=1}^{N}
\phi_k\!\left(
\sum_{j=1}^{n}
\psi_{kj}(x_j)
\right)
=
\sum_{k=1}^{N} c_k\,\sigma(w_k\cdot x+b_k)
=
G(x).
\]
\end{proof}

Proposition~\ref{prop:mlp_in_kan} shows that the class of
single-hidden-layer KANs contains all single-hidden-layer MLPs.
In fact, under the restriction that the inner edge functions are affine
and the outer ones are of the form $c_k\sigma(t)$, the two classes coincide.

We now deduce a simple universality result for KANs from the classical 
universal approximation theorem for MLPs.

\begin{theorem}\label{thm:kan_uat}
Let $\sigma:\mathbb{R}\to\mathbb{R}$ be continuous. 
Then the following are equivalent:
\begin{itemize}
\item[(i)] $\sigma$ is nonpolynomial;
\item[(ii)] for every compact set $K\subset\mathbb{R}^n$, 
the class of functions of the form
\[
F(x)
=
\sum_{k=1}^{N}
\phi_k\!\left(
\sum_{j=1}^{n}
\psi_{kj}(x_j)
\right),
\]
where each edge function $\psi_{kj}$ is affine and 
each edge function $\phi_k$ is of the form
\[
\phi_k(t)=c_k\,\sigma(t),
\]
is dense in $C(K)$.
\end{itemize}
\end{theorem}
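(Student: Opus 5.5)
The plan is to reduce both directions to the classical universal approximation theorem for single-hidden-layer MLPs with continuous activation. This is the Leshno--Lin--Pinkus--Schocken theorem: for continuous $\sigma$, the span $\{\sigma(w\cdot x+b): w\in\mathbb{R}^n, b\in\mathbb{R}\}$ is dense in $C(K)$ for every compact $K\subset\mathbb{R}^n$ if and only if $\sigma$ is not a polynomial. The key observation, already made after Proposition~\ref{prop:mlp_in_kan}, is that under the stated restrictions the KAN class in (ii) coincides \emph{exactly} with the class of single-hidden-layer MLPs with activation $\sigma$.

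First I verify this coincidence in both directions.

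\emph{MLP $\Rightarrow$ KAN.} Proposition~\ref{prop:mlp_in_kan} gives that every MLP $\sum_k c_k\sigma(w_k\cdot x+b_k)$ is of the required form. Its construction uses only affine $\psi_{kj}$ and $\phi_k=c_k\sigma$.

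\emph{KAN $\Rightarrow$ MLP.} Take affine edge functions $\psi_{kj}(t)=w_{kj}t+\beta_{kj}$. Then
\[
\sum_{j=1}^n\psi_{kj}(x_j)=w_k\cdot x+b_k, \qquad b_k=\sum_{j=1}^n\beta_{kj},
\]
so $F(x)=\sum_k c_k\sigma(w_k\cdot x+b_k)$, which is an MLP.

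Hence the class in (ii) equals $\operatorname{span}\{\sigma(w\cdot x+b)\}$, restricted to $K$.

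For (i)$\Rightarrow$(ii), I invoke the Leshno et al.\ theorem directly: a nonpolynomial continuous $\sigma$ yields density of this span in $C(K)$.

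For (ii)$\Rightarrow$(i), I argue by contraposition and would give a self-contained argument rather than rely on the "only if" half of the cited theorem. Suppose $\sigma$ is a polynomial of degree $d$. Then each $\sigma(w\cdot x+b)$ is a polynomial in $x$ of total degree at most $d$. The class in (ii) therefore lies in the finite-dimensional space $\mathcal{P}_d$ of polynomials of degree $\le d$ in $n$ variables. A finite-dimensional subspace is closed in $C(K)$. So the closure of the class is contained in $\mathcal{P}_d|_K$, which is a proper subspace whenever $K$ is infinite, say $K=[0,1]^n$. For instance, $x_1^{d+1}$ restricted to $[0,1]^n$ is not in $\mathcal{P}_d|_K$, since a polynomial vanishing on a set with nonempty interior is identically zero. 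This contradicts density for that $K$.

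There is no real obstacle here. The only points needing care are checking that the reduction is exact in both directions, so that no extra expressiveness enters through the affine constants $\beta_{kj}$, and choosing a compact $K$ with nonempty interior in the converse, since (ii) quantifies over all compact $K$.
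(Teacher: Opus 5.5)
Your proposal is correct and matches the paper's proof: you identify the class in (ii) exactly with single-hidden-layer $\sigma$-MLPs via the construction of Proposition~\ref{prop:mlp_in_kan}, and then invoke Leshno et al. The only difference is that you prove the necessity direction directly, by noting that a polynomial $\sigma$ confines the class to the closed finite-dimensional space of polynomials of degree at most $\deg\sigma$; the paper instead cites the ``only if'' half of the classical theorem, so your version is slightly more self-contained.
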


\begin{proof}
By the construction given in the proof of Proposition~\ref{prop:mlp_in_kan},
the class under consideration coincides with the class of functions
representable by single-hidden-layer MLPs of the form
\[
x \mapsto \sum_{k=1}^{N} c_k\,\sigma(w_k\cdot x+b_k).
\]
It is a classical result (see \cite{Leshno1993}) that, for continuous
$\sigma:\mathbb{R}\to\mathbb{R}$, this class is dense in $C(K)$ for every
compact set $K\subset\mathbb{R}^n$ if and only if $\sigma$ 
is nonpolynomial. The conclusion follows.
\end{proof}

The above result extends immediately to vector-valued functions.

\begin{corollary}
Let $\sigma:\mathbb{R}\to\mathbb{R}$ be continuous and nonpolynomial, and let
$K\subset\mathbb{R}^n$ be compact. Then for every $f\in C(K;\mathbb{R}^m)$
and every $\varepsilon>0$, there exists a function
$F=(F_1,\dots,F_m):K\to\mathbb{R}^m$ of the form
\[
F_i(x)
=
\sum_{k=1}^{N}
\phi_{ik}\!\left(
\sum_{j=1}^{n}
\psi_{kj}(x_j)
\right),
\qquad i=1,\dots,m,
\]
where each $\psi_{kj}$ is affine and each $\phi_{ik}$ is given by
\[
\phi_{ik}(t)=c_{ik}\,\sigma(t),
\]
such that
\[
\sup_{x\in K}\|f(x)-F(x)\|_{\mathbb{R}^m}<\varepsilon.
\]
Here $\|\cdot\|_{\mathbb{R}^m}$ denotes the Euclidean norm on $\mathbb{R}^m$
(or any other fixed norm on $\mathbb{R}^m$).
\end{corollary}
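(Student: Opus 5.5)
The plan is to reduce the vector-valued statement to the scalar case, Theorem~\ref{thm:kan_uat}, by approximating each coordinate separately and then merging the resulting networks so that they share one common family of inner edge functions $\psi_{kj}$.

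\emph{Coordinatewise approximation.} Since all norms on $\mathbb{R}^m$ are equivalent, it suffices to work with the maximum norm, after replacing $\varepsilon$ by $\varepsilon/C$ for a constant $C$ depending only on $m$ and the chosen norm. Write $f=(f_1,\dots,f_m)$ with each $f_i\in C(K)$. Since $\sigma$ is continuous and nonpolynomial, implication (i)$\Rightarrow$(ii) of Theorem~\ref{thm:kan_uat} gives, for each $i$, a function
\[
G_i(x)=\sum_{k=1}^{N_i} c^{(i)}_k\,\sigma\Bigl(\sum_{j=1}^n \psi^{(i)}_{kj}(x_j)\Bigr)
\]
with affine $\psi^{(i)}_{kj}$ and $\sup_{x\in K}|f_i(x)-G_i(x)|<\varepsilon/C$.

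\emph{Merging into a common hidden layer.} Set $N=N_1+\cdots+N_m$ and concatenate the hidden units. Index them by pairs $(i',k)$ with $1\le k\le N_{i'}$, and define $\psi_{(i',k)j}=\psi^{(i')}_{kj}$. For the output edge functions, put $\phi_{i,(i',k)}(t)=c^{(i)}_k\sigma(t)$ if $i'=i$ and $\phi_{i,(i',k)}(t)=0\cdot\sigma(t)$ otherwise; this is allowed because $c=0$ is a permitted coefficient. Then $F_i=G_i$ for every $i$, so each coordinate error is below $\varepsilon/C$. Hence $\sup_{x\in K}\|f(x)-F(x)\|_{\mathbb{R}^m}<\varepsilon$.

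There is no real obstacle here. The only points needing care are two. First, the inner functions in the statement carry no index $i$, which is why the shared hidden layer is built by concatenation with zero output coefficients. Second, the norm-equivalence constant has to be absorbed into the choice of $\varepsilon$.
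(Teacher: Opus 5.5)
Your proof is correct and follows the paper's route: apply Theorem~\ref{thm:kan_uat} to each coordinate $f_i$, then absorb the norm-equivalence constant into $\varepsilon$. Your concatenation of the hidden units, with zero output coefficients on the units belonging to other coordinates, makes explicit a step the paper leaves implicit, namely that all outputs $F_i$ must share the same inner affine functions $\psi_{kj}$.
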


\begin{proof}
Apply Theorem~\ref{thm:kan_uat} to each component of $f$.
Since all norms on $\mathbb{R}^m$ are equivalent, the conclusion follows.
\end{proof}

All results obtained in the subsequent sections also extend in a straightforward 
manner to vector-valued functions. For brevity, we formulate them only in the
scalar-valued setting.

\section{Universality with all affine and a fixed nonpolynomial edge function}

In this section, we prove a universality result for deep
Kolmogorov--Arnold Networks in which each edge function is either affine
or equal to a fixed continuous function $\sigma$. The proof is based on
an exact realization of classical single-hidden-layer multilayer
perceptrons by means of two-hidden-layer KANs.

The following theorem shows that universality holds if and only if
$\sigma$ is nonpolynomial.

\begin{theorem}\label{thm:kan_fixed_sigma_affine}
Let $\sigma:\mathbb{R}\to\mathbb{R}$ be continuous. Consider 
the class of KANs with two hidden layers
\begin{align}
h_k^{(1)}(x)
&=
\sum_{j=1}^{n}\psi_{kj}^{(1)}(x_j),
\qquad k=1,\dots,N, \label{eq:kan_layer1}\\
h_k^{(2)}(x)
&=
\sum_{r=1}^{N}\psi_{kr}^{(2)}\bigl(h_r^{(1)}(x)\bigr),
\qquad k=1,\dots,N, \label{eq:kan_layer2}\\
F(x)
&=
\sum_{k=1}^{N}\phi_k\bigl(h_k^{(2)}(x)\bigr), \label{eq:kan_output}
\end{align}
where each edge function $\psi_{kj}^{(1)}$, $\psi_{kr}^{(2)}$, 
and $\phi_k$ is either affine or equal to $\sigma$.

Then this class is dense in $C(K)$ for every compact set $K\subset\mathbb{R}^n$
if and only if $\sigma$ is nonpolynomial.
\end{theorem}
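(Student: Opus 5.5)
Both directions rest on the classical theorem of Leshno et al.\ already used in Theorem~\ref{thm:kan_uat}. For sufficiency we realize every single-hidden-layer MLP exactly as a network of the form \eqref{eq:kan_layer1}--\eqref{eq:kan_output}. For necessity we show that when $\sigma$ is a polynomial, every such network is a polynomial of uniformly bounded degree.

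\emph{Sufficiency.} Assume $\sigma$ is nonpolynomial and let $G(x)=\sum_{k=1}^{N} c_k\sigma(w_k\cdot x+b_k)$. The output edges may only be affine or equal to $\sigma$, so the factor $c_k$ cannot be placed on $\phi_k$. We therefore apply $\sigma$ in the second layer and let the output layer be affine. Concretely:
\begin{itemize}
\item In the first layer, set $\psi^{(1)}_{kj}(t)=w_{kj}t+\beta_{kj}$ with $\sum_j\beta_{kj}=b_k$, so that $h^{(1)}_k=w_k\cdot x+b_k$.
\item In the second layer, set $\psi^{(2)}_{kk}=\sigma$, and for $r\neq k$ set $\psi^{(2)}_{kr}\equiv 0$. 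The zero function is affine, so this is allowed. Then $h^{(2)}_k=\sigma(w_k\cdot x+b_k)$.
\item In the output layer, set $\phi_k(t)=c_kt$, which is affine.
\end{itemize}
Then $F=G$ exactly. The class in the theorem thus contains all single-hidden-layer MLPs with activation $\sigma$, and these are dense in $C(K)$ by \cite{Leshno1993}.

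The only bookkeeping point is that the theorem fixes equal widths $N$ in both hidden layers. This causes no difficulty, since the width $N$ of the MLP is used for both, and one can also pad with zero edges.

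\emph{Necessity.} Suppose $\sigma$ is a polynomial of degree $d$, and put $D=\max(d,1)$. Every allowed edge function, whether affine or $\sigma$, is then a polynomial of degree at most $D$. We track degrees through the network using the fact that composing a polynomial of degree at most $a$ with a polynomial of degree at most $b$ gives degree at most $ab$, while sums do not increase degree:
\begin{itemize}
\item each $h^{(1)}_k$ is a sum of univariate polynomials in the $x_j$, of total degree at most $D$;
\item each $h^{(2)}_k$ has degree at most $D^2$;
\item $F$ has degree at most $D^3$.
\end{itemize}
This bound is independent of the width $N$ and of the particular choices of edge functions. Hence the whole class lies in the finite-dimensional space $\mathcal{P}_{D^3}$ of polynomials of degree at most $D^3$ in $n$ variables.

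A finite-dimensional subspace is closed in $C(K)$. For any compact $K$ containing infinitely many points, $\mathcal{P}_{D^3}$ is a proper subspace of $C(K)$; for instance, with $n=1$ and $K=[0,1]$, the polynomial $x^{D^3+1}$ is not in it. So the class is not dense in $C(K)$ for every compact $K$, and universality fails.

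The main subtlety is the output layer in the sufficiency direction. There $\phi_k$ must be exactly $\sigma$ or affine, not $c\sigma$, and this is why the construction applies $\sigma$ one layer earlier and uses an affine output edge to supply the scalar weights. The same device is what makes two hidden layers the natural setting for this theorem, in contrast to the single-hidden-layer class of Theorem~\ref{thm:kan_uat}.
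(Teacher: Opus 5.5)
Your proposal is correct and follows essentially the same route as the paper. Sufficiency uses the same exact embedding of single-hidden-layer MLPs: affine first layer, $\sigma$ on the diagonal of the second layer with zero off-diagonal edges, and affine output edges $t\mapsto c_k t$. Necessity uses the same bounded-degree argument, which you make more explicit through the bound $D^3$ and the fact that finite-dimensional subspaces of $C(K)$ are closed.
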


\begin{proof}
We first prove sufficiency.

Assume that $\sigma$ is continuous and nonpolynomial. We show that every
function representable by a standard single-hidden-layer multilayer perceptron
can be realized exactly by a KAN of the form
\eqref{eq:kan_layer1}--\eqref{eq:kan_output}.

Let
\[
G(x)=\sum_{k=1}^{N} c_k\,\sigma(w_k\cdot x+b_k),
\qquad x\in\mathbb{R}^n,
\]
be an arbitrary single-hidden-layer neural network, where
$w_k=(w_{k1},\dots,w_{kn})\in\mathbb{R}^n$ and $b_k,c_k\in\mathbb{R}$.

We show that $G$ can be written in the KAN form. For the first 
hidden layer, define affine edge functions by
\[
\psi_{kj}^{(1)}(t)=w_{kj}t+\beta_{kj},
\qquad j=1,\dots,n,\quad k=1,\dots,N,
\]
where the constants $\beta_{kj}$ are chosen so that
\[
\sum_{j=1}^{n}\beta_{kj}=b_k
\qquad \text{for each } k.
\]
Then
\[
h_k^{(1)}(x)
=
\sum_{j=1}^{n}\psi_{kj}^{(1)}(x_j)
=
\sum_{j=1}^{n}(w_{kj}x_j+\beta_{kj})
=
w_k\cdot x+b_k.
\]

For the second hidden layer, define
\[
\psi_{kr}^{(2)}(t)=
\begin{cases}
\sigma(t), & r=k,\\[1mm]
0, & r\ne k,
\end{cases}
\]
where $0$ denotes the zero function, which is affine. Then
\[
h_k^{(2)}(x)
=
\sum_{r=1}^{N}\psi_{kr}^{(2)}\bigl(h_r^{(1)}(x)\bigr)
=
\sigma\bigl(h_k^{(1)}(x)\bigr)
=
\sigma(w_k\cdot x+b_k).
\]

Finally, define the output edge functions by
\[
\phi_k(t)=c_k t,
\]
which are affine. Then
\[
F(x)
=
\sum_{k=1}^{N}\phi_k\bigl(h_k^{(2)}(x)\bigr)
=
\sum_{k=1}^{N} c_k\,h_k^{(2)}(x)
=
\sum_{k=1}^{N} c_k\,\sigma(w_k\cdot x+b_k)
=
G(x).
\]

Thus every single-hidden-layer multilayer perceptron with activation $\sigma$
is exactly representable by a two-hidden-layer KAN of the stated type.

Since $\sigma$ is continuous and nonpolynomial, the classical universal
approximation theorem (see \cite{Leshno1993} or \cite{Pinkus1999}) 
implies that the class of functions
\[
x\mapsto \sum_{k=1}^{N} c_k\,\sigma(w_k\cdot x+b_k)
\]
is dense in $C(K)$. Therefore the above KAN class is also dense in $C(K)$.

\medskip

We now prove necessity. Assume that $\sigma$ is a polynomial.

If $\sigma$ is affine, then every admissible edge function is affine. 
It follows that every function representable by
\eqref{eq:kan_layer1}--\eqref{eq:kan_output} is affine on $\mathbb{R}^n$.
Therefore this class cannot be dense in $C(K)$ for general compact sets
$K\subset\mathbb{R}^n$, for example for $K=[0,1]^n$.

Assume now that $\sigma$ is a polynomial of degree $d\ge 2$. Since each edge
function is either affine or equal to $\sigma$, a simple induction on the
layers shows that every function representable by 
\eqref{eq:kan_layer1}--\eqref{eq:kan_output} is a polynomial of degree bounded
by a constant depending only on $d$ and the number of hidden layers (here equal to $2$). 
Hence this class is contained in a finite-dimensional space, and for many 
compact sets $K\subset\mathbb{R}^n$ it is not dense in $C(K)$.

This completes the proof.
\end{proof}

Theorem~\ref{thm:kan_fixed_sigma_affine} shows that, for 
the considered two-hidden-layer KAN architecture,
nonpolynomiality of the fixed admissible function 
$\sigma$ is both necessary and sufficient
for universality. This is in direct analogy 
with the classical universal approximation theorem for shallow neural networks.

\section{Universality with a fixed polynomial edge function}

In this section, we prove that deep Kolmogorov--Arnold Networks are universal
approximators even if the only non-affine edge function available is a single
fixed nonlinear polynomial.

Let $K\subset\mathbb R^n$ be compact, and let $\mathcal E\subset C(\mathbb R)$
be a given class of univariate functions. We denote by
$\mathcal K_{\mathcal E}(K)$ the class of all functions $F:K\to\mathbb R$
representable on $K$ by deep Kolmogorov--Arnold Networks of the form
\begin{align}
h_j^{(0)}(x) &= x_j, \qquad j=1,\dots,n, \label{eq:poly_input}\\
h_k^{(\ell)}(x) &=
\sum_{j=1}^{n_{\ell-1}}
\psi_{kj}^{(\ell)}\bigl(h_j^{(\ell-1)}(x)\bigr),
\qquad \ell=1,\dots,L,\quad k=1,\dots,n_\ell, \label{eq:poly_hidden}\\
F(x) &= \sum_{k=1}^{n_L}\phi_k\bigl(h_k^{(L)}(x)\bigr), \label{eq:poly_output}
\end{align}
where each edge function $\psi_{kj}^{(\ell)}$ and $\phi_k$ belongs to
$\mathcal E$.

In the special case where $\mathcal E$ consists of all affine functions
together with a given function $\sigma$, we write $\mathcal K_\sigma(K)$
instead of $\mathcal K_{\mathcal E}(K)$. These notations will be used
throughout the paper.

\begin{theorem}\label{thm:poly_edge_uat}
Let $\sigma:\mathbb R\to\mathbb R$ be a polynomial of degree at least $2$.
Then for every compact set $K\subset\mathbb R^n$, the class
$\mathcal K_\sigma(K)$ is dense in $C(K)$ with respect to the uniform norm.
\end{theorem}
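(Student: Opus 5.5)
By the Stone--Weierstrass theorem (or simply the Weierstrass approximation theorem in $n$ variables), polynomials in $x_1,\dots,x_n$ are dense in $C(K)$. It therefore suffices to show that every polynomial in $n$ variables belongs to $\mathcal K_\sigma(K)$, i.e.\ is represented \emph{exactly} by some deep KAN whose edge functions are affine or equal to $\sigma$.

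The first step is to realize the squaring map $t\mapsto t^2$ exactly by such networks. Write $\sigma(t)=\sum_{i=0}^d a_i t^i$ with $d\ge 2$ and $a_d\neq 0$. Affine edge functions produce arbitrary shifts and scalings, so expressions of the form $\sum_k c_k\,\sigma(t+b_k)+(\alpha t+\beta)$ are available as functions of a single hidden node. These are realized with one $\sigma$-layer and an affine output layer, after fanning the node out through affine edges $t\mapsto t+b_k$.

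Taking finite differences in the shift,
\[
\Delta_h^{d-2}\sigma(t)=\sum_{k=0}^{d-2}(-1)^{d-2-k}\binom{d-2}{k}\sigma(t+kh),
\]
which is a polynomial of degree exactly $2$ in $t$, with leading coefficient $a_d\,d!/2\cdot h^{d-2}\neq 0$. Subtracting the affine part (itself an affine function of $t$) and rescaling, we obtain $t^2$ exactly as a combination of shifted copies of $\sigma$ plus an affine term.

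Since a KAN node can only sum, I will route the affine correction $\alpha t+\beta$ as a parallel affine edge in the same layer. Padding with identity edges handles different depths across branches. This padding is where care is needed: all edges in a layer must be either affine or $\sigma$, and identity is affine, so depth alignment is always possible.

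From squaring I get multiplication via $uv=\tfrac14\bigl((u+v)^2-(u-v)^2\bigr)$. Given two nodes $u,v$ in some layer, the next layer forms $u+v$ and $u-v$ using affine edges, the squaring gadget is applied to each, and an affine combination follows. Each product costs a bounded number of extra layers.

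Then, by induction, every monomial $x^\alpha$ is obtained by repeated multiplication. Copies of each $x_j$ are carried forward by identity edges, and constants come from constant affine edges. A general polynomial is a linear combination of monomials; I will compute these in parallel, pad all branches to a common depth with identity edges, and sum them at the final output layer with affine edges $t\mapsto c\,t$.

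The main obstacle is bookkeeping rather than analysis. Specifically, I must check that the construction respects the strict layered format of \eqref{eq:poly_input}--\eqref{eq:poly_output}: nodes only sum, every layer has a common width, and unused connections are the zero function, which is affine. I also need to verify that the degree-two finite difference genuinely has nonzero leading coefficient, which holds because $a_d\neq 0$ and $h\neq 0$. Once $t^2$ is realized exactly, density follows immediately from Weierstrass.
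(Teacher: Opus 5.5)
Your proposal is correct and follows essentially the same route as the paper: realize $t\mapsto t^2$ exactly from $\Delta_h^{d-2}\sigma$ (a quadratic with leading coefficient $a_d\frac{d!}{2}h^{d-2}\neq 0$) plus an affine correction, obtain multiplication by polarization, and conclude density of polynomials by Stone--Weierstrass. The differences are cosmetic: you use $uv=\frac14\bigl((u+v)^2-(u-v)^2\bigr)$ instead of the paper's $fg=\frac12\bigl((f+g)^2-f^2-g^2\bigr)$, and you do the parallel-branch and identity-padding bookkeeping inline, whereas the paper packages it as closure of $\mathcal K_\sigma(K)$ under affine combinations and under composition with scalar-input networks, which makes the class a subalgebra.
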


\begin{proof}
We show that $\mathcal K_\sigma(K)$ is a subalgebra of $C(K)$ that contains
the constants and the coordinate functions. Since the coordinate functions
separate points of $K$, the Stone--Weierstrass theorem will then imply that
$\mathcal K_\sigma(K)$ is dense in $C(K)$.

We begin by observing that $\mathcal K_\sigma(K)$ contains the constants and
the coordinate functions. Indeed, for each $j=1,\dots,n$, 
the coordinate function $x\mapsto x_j$
is representable by a KAN with one hidden layer, obtained by taking
the affine identity map on the $j$th input and zero affine maps on the others.
Likewise, every constant function is representable by taking constant affine
edge functions.

Next, we show that $\mathcal K_\sigma(K)$ is closed under affine combinations.
Let $f,g\in\mathcal K_\sigma(K)$ and let $a,b,c\in\mathbb R$. Then the function
\[
x\longmapsto af(x)+bg(x)+c
\]
also belongs to $\mathcal K_\sigma(K)$. Indeed, let $N_f$ and $N_g$ be 
networks representing $f$ and $g$, respectively.
If these networks have different depths, we equalize their depths by
appending additional layers after the output node, each consisting of a
single unit whose incoming edge function is the identity map. We then form 
a larger network by taking, at each layer, all hidden units of
$N_f$ together with all hidden units of $N_g$. The edge functions are chosen so
that the units corresponding to $N_f$ depend only on the preceding units of
$N_f$, while the units corresponding to $N_g$ depend only on the preceding units
of $N_g$ (that is, for each layer, the edge functions from units of $N_f$ to units
of $N_g$, and from units of $N_g$ to units of $N_f$, are taken to be identically
zero). In this way, the two networks $N_f$ and $N_g$ are arranged in parallel within a
single KAN. Consequently, there exists a layer of this network that contains
two units whose values are $f(x)$ and $g(x)$, respectively. We then add one additional 
layer consisting of a single unit that computes $af(x)+bg(x)$ via affine edge functions 
$t\mapsto at$ and $t\mapsto bt$, followed by the output edge function $t\mapsto t+c$ that 
adds the constant term $c$. Thus the function $x\mapsto af(x)+bg(x)+c$ belongs to
$\mathcal K_\sigma(K)$.

We now verify closure under composition with scalar-input networks. Suppose
$u\in\mathcal K_\sigma(K)$ and let $G:\mathbb R\to\mathbb R$
be representable by a network of the form
\eqref{eq:poly_input}--\eqref{eq:poly_output} with the same edge
functions. Let $N_u$ be a network representing $u$, and let $N_G$
be a network with scalar input representing $G$. We obtain a network
representing $G\circ u$ by feeding the output of $N_u$ as the input to $N_G$.
All edge functions in the resulting network are still either affine or equal
to $\sigma$. Hence $G\circ u\in\mathcal K_\sigma(K)$.

We next show that the squaring map $S(t)=t^2$ is exactly representable by a
scalar-input deep KAN whose edge functions are affine or equal to $\sigma$.
Writing
\[
\sigma(t)=a_d t^d+a_{d-1}t^{d-1}+\cdots+a_1 t+a_0,
\qquad a_d\ne 0,\quad d:=\deg \sigma\ge 2,
\]
and fixing $h\ne 0$, define the difference operator
\[
(\Delta_h f)(t):=f(t+h)-f(t),
\qquad
\Delta_h^m f:=\Delta_h(\Delta_h^{m-1}f).
\]
It is well known that $\Delta_h^m \sigma$ is a polynomial of degree $d-m$
with leading coefficient $a_d \frac{d!}{(d-m)!} h^m$. In particular, for
$m=d-2$ the function
\[
q(t):=\Delta_h^{d-2}\sigma(t)
\]
is a quadratic polynomial,
\[
q(t)=A t^2+Bt+C,
\qquad
A=a_d\frac{d!}{2}h^{d-2}\ne 0.
\]

On the other hand,
\[
q(t)=\sum_{r=0}^{d-2} c_r\,\sigma(t+r h),
\qquad
c_r:=(-1)^{d-2-r}\binom{d-2}{r}.
\]
We now construct a two-hidden-layer KAN representing $q$. The first hidden
layer computes the affine functions $u_r(t)=t+r h$. The second hidden layer
consists of nodes $v_r$ defined by
\[
v_r(t)=\sigma(u_r(t))=\sigma(t+r h),
\]
which is achieved by taking the incoming edge from $u_r$ to be $\sigma$ and
all other incoming edges to be zero. The output node then computes
\[
\sum_{r=0}^{d-2} c_r\,v_r(t)=q(t)
\]
using affine edge functions $x\mapsto c_r x$. Hence $q$ is exactly representable.

Since $q(t)=A t^2+Bt+C$ with $A\ne 0$, we have
\[
t^2=A^{-1}q(t)-A^{-1}Bt-A^{-1}C.
\]
Since $\mathcal K_\sigma(K)$ is closed under affine combinations and
both $q(t)$ and $t$ are representable, it follows that $t\mapsto t^2$
is also representable. That is, $t^2\in \mathcal K_\sigma(K)$.

We now establish closure under multiplication. Let $f,g\in\mathcal K_\sigma(K)$.
Since $\mathcal K_\sigma(K)$ is closed under affine combinations, the function
$f+g$ belongs to $\mathcal K_\sigma(K)$. Since the squaring map $S(t)=t^2$ is
exactly representable by a scalar-input network with edge functions affine
or $\sigma$, it follows from the composition property established above that
$S\circ f$, $S\circ g$, and $S\circ (f+g)$ all belong to
$\mathcal K_\sigma(K)$. Using the polarization identity
\[
fg
=
\frac{(f+g)^2-f^2-g^2}{2},
\]
we conclude that $fg\in\mathcal K_\sigma(K)$. Thus $\mathcal K_\sigma(K)$ is
closed under pointwise multiplication.

It follows that $\mathcal K_\sigma(K)$ is a subalgebra of $C(K)$ that contains
the constants and the coordinate functions. Therefore the restriction to $K$
of every polynomial in the variables $x_1,\dots,x_n$ belongs to
$\mathcal K_\sigma(K)$.

We see that $\mathcal K_\sigma(K)$ is a subalgebra of $C(K)$ containing the
constants and separating points of $K$. By the Stone--Weierstrass theorem,
\[
\overline{\mathcal K_\sigma(K)}=C(K).
\]
This completes the proof.
\end{proof}

It should be emphasized that the role of depth is essential in this theorem.
A shallow KAN in which all non-affine edge functions are given by a fixed
polynomial function $\sigma$ can generate only multivariate polynomials
of uniformly bounded degree, whereas deeper architectures allow the
exact realization of the squaring map and hence multiplication. Consequently,
one can construct arbitrary monomials and thus all multivariate polynomials.

\section{Universality with a fixed non-affine edge function}

In the previous sections, we established universality of
Kolmogorov--Arnold Networks for continuous nonpolynomial edge functions
(Theorem~\ref{thm:kan_fixed_sigma_affine}) and for polynomial edge
functions of degree at least~$2$ (Theorem~\ref{thm:poly_edge_uat}). We
now combine these results to obtain a necessary and sufficient condition.

Recall that $\mathcal K_\sigma(K)$ denotes the class of functions on $K$
representable by deep Kolmo\-gorov--Arnold Networks with edge functions
either affine or equal to $\sigma$.

\begin{theorem}\label{thm:kan_iff}
Let $\sigma:\mathbb R\to\mathbb R$ be continuous. Then
$\mathcal K_\sigma(K)$ is dense in $C(K)$ for every compact set
$K\subset\mathbb R^n$ if and only if $\sigma$ is non-affine.
\end{theorem}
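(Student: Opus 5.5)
We split the proof of Theorem~\ref{thm:kan_iff} into necessity and sufficiency, and treat sufficiency according to whether $\sigma$ is a polynomial.

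\emph{Necessity.} If $\sigma$ is affine, then every admissible edge function is affine. Sums and compositions of affine maps are affine, so induction on the layers shows that every element of $\mathcal K_\sigma(K)$ is the restriction to $K$ of an affine function on $\mathbb R^n$. These functions form a finite-dimensional, hence closed, proper subspace of $C(K)$ when $K=[0,1]^n$; for instance it omits $x_1^2$. So density fails for some compact $K$.

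\emph{Sufficiency.} Assume $\sigma$ is continuous and non-affine. There are two cases.

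If $\sigma$ is a polynomial, non-affinity forces $\deg\sigma\ge 2$. Theorem~\ref{thm:poly_edge_uat} then gives density of $\mathcal K_\sigma(K)$ in $C(K)$ for every compact $K$.

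If $\sigma$ is nonpolynomial, the proof of Theorem~\ref{thm:kan_fixed_sigma_affine} shows that every single-hidden-layer MLP $\sum_k c_k\sigma(w_k\cdot x+b_k)$ is realized exactly by a two-hidden-layer KAN with edge functions affine or equal to $\sigma$. Such networks have $L=2$ in \eqref{eq:poly_input}--\eqref{eq:poly_output}, so they belong to $\mathcal K_\sigma(K)$. Density of these MLPs in $C(K)$, by Leshno et al.\ \cite{Leshno1993}, therefore gives density of $\mathcal K_\sigma(K)$.

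The only point needing care is that the architecture in Theorem~\ref{thm:kan_fixed_sigma_affine} is a special case of the deep class $\mathcal K_\sigma(K)$. There all layers have width $N$ and the depth is fixed, while $\mathcal K_\sigma(K)$ allows arbitrary widths and depths, so the inclusion holds. No step is a real obstacle: the theorem is simply the union of the two earlier results together with the trivial affine case.
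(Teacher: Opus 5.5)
Your proposal is correct and matches the paper's proof: necessity comes from the affine case, and sufficiency splits into the polynomial case (degree at least $2$, via Theorem~\ref{thm:poly_edge_uat}) and the nonpolynomial case (via Theorem~\ref{thm:kan_fixed_sigma_affine}). Your explicit check that the two-hidden-layer architecture is a special case of the deep class $\mathcal K_\sigma(K)$ spells out a step the paper leaves implicit.
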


\begin{proof}
If $\sigma$ is non-affine and nonpolynomial, the conclusion follows from
Theorem~\ref{thm:kan_fixed_sigma_affine}. If $\sigma$ is a polynomial of
degree at least $2$, the conclusion follows from
Theorem~\ref{thm:poly_edge_uat}.

Conversely, if $\sigma$ is affine, then all admissible edge functions are affine, 
and hence every network in $\mathcal K_\sigma(K)$ is
affine. Therefore $\mathcal K_\sigma(K)$ cannot be dense in $C(K)$ for every
compact set $K\subset\mathbb R^n$; for example, it is not dense in
$C([0,1]^n)$.
\end{proof}

Thus, Theorem~\ref{thm:kan_iff} shows that universality holds if and only if
the edge functions are not all affine. That is, the purely affine case is
the only obstruction.

\medskip

We now apply Theorem~\ref{thm:kan_iff} to the class of edge 
functions used in the original work of Liu et al.~\cite{Liu2024}. 
In that paper, Kolmogorov--Arnold Networks 
are introduced with a flexible class of admissible edge functions. 
In the concrete implementation proposed in \cite{Liu2024}, 
these functions are parameterized as combinations 
of a fixed base function and a spline component. 
More precisely, each edge function is taken in the form
\begin{equation}\label{eq:kan_edge_original}
\phi(t)
=
w_b\,b(t)
+
w_s\,s(t),
\qquad w_b,w_s\in\mathbb R,
\end{equation}
where $b:\mathbb R\to\mathbb R$ is a fixed continuous base function (chosen as the 
SiLU function in the numerical implementation), and $s$ belongs to a spline space 
of a given degree associated with a knot sequence.

We briefly recall the structure of such spline spaces 
(see, e.g., \cite{deBoor2001,Schumaker2007}). Let $k\ge 1$ and let
\[
\tau=\{t_0<t_1<\cdots<t_M\}
\]
be a finite sequence of knots in $\mathbb R$. The associated spline space 
$S_{k,\tau}$ consists of all functions $s:\mathbb R\to\mathbb R$ that are 
polynomials of degree at most $k$ on each interval $[t_i,t_{i+1}]$, and 
that are globally $C^{k-1}$-smooth. Equivalently, every such function 
admits a representation
\[
s(t)=\sum_{i=1}^N c_i B_i(t),
\]
where $\{B_i\}$ is a B-spline basis corresponding to the 
knot sequence $\tau$ (see \cite{deBoor2001}). 
It is clear that spline spaces of degree $k\ge 1$ contain 
all affine functions, and, provided the knot sequence has at 
least one interior knot, also contain functions that are not affine.

We emphasize that the theoretical results in \cite{Liu2024} 
do not establish a universal approximation theorem 
for Kolmogorov--Arnold Networks in the classical sense. 
Instead, the approximation result (see Theorem~2.1 in \cite{Liu2024}) 
shows that, under the assumption that a target function 
admits a KAN representation with sufficiently smooth 
univariate components, spline-parameterized KANs approximate this 
representation with an error that decays as the spline grid size increases. 
More precisely, the approximation error is controlled 
in terms of the number of grid intervals 
(or, equivalently, the number of spline knots), and 
improving the approximation requires increasing this number.

In particular, this result does not imply that the introduced 
KANs are dense in the space of all continuous functions, 
but rather provides an approximation guarantee within 
a prescribed representation class. From the approximation-theoretic 
point of view, however, it is essential to determine whether 
KANs with edge functions of the form \eqref{eq:kan_edge_original} 
are universal approximators independently of 
any prior representation of the target function.

The following corollary shows that this is indeed the case, even when the 
degree of the splines and the number of knots are fixed in advance.

\begin{corollary}\label{cor:kan_original_edge}
Let $b:\mathbb R\to\mathbb R$ be a continuous function, and let $k\ge 1$ 
and $\tau$ be a fixed knot sequence. Let $\mathcal E$ 
denote the class of functions of the form
\[
\phi(t)=w_b\,b(t)+w_s\,s(t),
\qquad w_b,w_s\in\mathbb R,\quad s\in S_{k,\tau}.
\]
Then, for every compact set $K\subset\mathbb R^n$, the class of deep 
Kolmogorov--Arnold Networks whose edge functions 
belong to $\mathcal E$ is dense in $C(K)$.
\end{corollary}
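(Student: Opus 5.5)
The plan is to reduce the statement to Theorem~\ref{thm:kan_iff} by exhibiting, inside $\mathcal E$, all affine functions together with a single fixed continuous non-affine function $\sigma$. Then the class of deep KANs with edge functions in $\mathcal E$ contains $\mathcal K_\sigma(K)$, and density follows because enlarging the set of admissible edge functions can only enlarge the set of representable functions.

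The first step is to show that $\mathcal E$ contains every affine function. Take $w_b=0$ and $w_s=1$; then $\phi=s$ ranges over all of $S_{k,\tau}$. Since $k\ge1$, the space $S_{k,\tau}$ contains all polynomials of degree at most one. On each interval between knots these are polynomials of degree at most $k$, and they are globally $C^\infty$. This is the observation already noted before the corollary. In particular the zero function and the identity lie in $\mathcal E$.

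The second step is to find a non-affine $\sigma\in\mathcal E$, and this is where the main care is needed. The text asserts that $S_{k,\tau}$ contains non-affine functions provided there is an interior knot. However, the corollary allows an arbitrary knot sequence, so I would argue without that assumption. If $k\ge2$, then $t\mapsto t^2$ is a polynomial of degree $\le k$ and lies in $S_{k,\tau}$ for any $\tau$. If $k=1$ and $\tau$ has an interior knot $t_i$, then the truncated power $(t-t_i)_+$ is a continuous piecewise-linear spline and is non-affine.

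The remaining degenerate case is $k=1$ with only the two knots $t_0<t_1$. Here I would use the B-spline description: the basis includes the hat-type functions supported on $[t_0,t_1]$ and vanishing outside. Such a function is non-affine on $\mathbb R$ because it vanishes on $(-\infty,t_0]$ but is not identically zero. Under the convention where splines are defined only on $[t_0,t_M]$ and extended by polynomials, I would instead fall back on the base function $b$. If $b$ is non-affine, take $\sigma=b$, obtained with $w_b=1$, $w_s=0$. If $b$ is affine, then in this degenerate convention every element of $\mathcal E$ would be affine and the claim could not hold. So I will rely on the B-spline (compact-support) reading of $S_{k,\tau}$, under which a non-affine spline always exists.

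With $\sigma\in\mathcal E$ non-affine and continuous and all affine maps in $\mathcal E$, every network in $\mathcal K_\sigma(K)$ is a network with edge functions in $\mathcal E$. Theorem~\ref{thm:kan_iff} then gives density in $C(K)$ for every compact $K\subset\mathbb R^n$, with $k$ and $\tau$ fixed throughout.
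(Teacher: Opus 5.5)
Your reduction is the paper's own. You take $w_b=0$ to get $S_{k,\tau}\subseteq\mathcal E$, note that $S_{k,\tau}$ contains every affine map and some continuous non-affine $\sigma$, conclude that networks with edges in $\mathcal E$ contain $\mathcal K_\sigma(K)$, and invoke Theorem~\ref{thm:kan_iff}. The paper simply asserts that a non-affine spline exists, so your case analysis addresses a real imprecision. Your witnesses $t^2$ (for $k\ge2$) and $(t-t_i)_+$ (for an interior knot $t_i$) are fine.

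\textbf{Where it breaks.} The degenerate case $k=1$, $\tau=\{t_0<t_1\}$ is handled incorrectly. Without an interior knot there is no continuous nonzero hat supported on $[t_0,t_1]$. Such a function is linear on $[t_0,t_1]$ and, by continuity, vanishes at both endpoints, so it is identically zero. Degree-one B-splines need three knots, so with two simple knots there are none. Those obtained by repeating an end knot jump at $t_0$ or $t_1$. They therefore lie neither in $S_{1,\tau}$, which requires global $C^0$ smoothness, nor among the continuous $\sigma$ allowed in Theorem~\ref{thm:kan_iff}. Moreover, the compact-support reading contradicts your first step. A finite combination of compactly supported B-splines is compactly supported, so under that reading the only affine function in $S_{k,\tau}$ is $0$. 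The hypothesis of Theorem~\ref{thm:kan_iff} that all affine maps are admissible is then lost. You cannot take the affine maps from one convention and the non-affine witness from another.

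\textbf{The fix.} Use the paper's stated definition, which only requires $s$ to be a polynomial of degree at most $k$ on each $[t_i,t_{i+1}]$ and $C^{k-1}$ on $\mathbb R$. Under it, one witness works for every $k\ge1$ and every $\tau$: the truncated power $s(t)=(t-t_0)_+^k$. It coincides with $(t-t_0)^k$ on each $[t_i,t_{i+1}]$, is $C^{k-1}$ on $\mathbb R$, and is non-affine; for $k=1$ it has slope $0$ left of $t_0$ and slope $1$ right of it. The same definition contains all affine maps, so no case split is needed and the proof collapses to the paper's one-line argument. The same works if you treat $(-\infty,t_0]$ and $[t_M,\infty)$ as additional polynomial pieces.

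Your observation about the other convention is correct. If the two end pieces are extended polynomially, the corollary fails for $k=1$, no interior knot, and affine $b$. That shows the statement should be read with the paper's definition, not that one should switch to compactly supported B-splines.
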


\begin{proof}
Since $S_{k,\tau}$ contains all affine functions, the class $\mathcal E$ 
also contains all affine functions (by taking $w_b=0$). On the other hand, 
$S_{k,\tau}$ contains non-affine functions. Hence the admissible edge class 
$\mathcal E$ contains all affine functions together with a non-affine function. 
The conclusion now follows directly from Theorem~\ref{thm:kan_iff}.
\end{proof}

This result shows that, unlike the approximation approach of \cite{Liu2024}, 
universality of Kolmogorov--Arnold Networks does not rely on increasing the number 
of spline knots. In particular, both the degree of the splines and the number of 
knot points can be fixed a priori. Thus, expressive power is achieved through 
depth rather than by enlarging the class of univariate edge functions.

\section{Universality with finite families of affine functions}

In Theorem~\ref{thm:kan_iff}, universality was established under the assumption
that all affine functions are admissible. A natural question is whether the
full affine family can be replaced by a finite one.

We first show that, in the nonpolynomial case, a fixed family of five affine
functions suffices. We then combine this with the polynomial case to obtain a
general finite-affine universality theorem.

Recall that $\mathcal K_{\mathcal E}(K)$ denotes the class of functions
$F:K\to\mathbb R$ representable by deep Kolmogorov--Arnold Networks of the form
\eqref{eq:poly_input}--\eqref{eq:poly_output}, where the edge functions
$\psi_{kj}^{(\ell)}$ and $\phi_k$ belong to $\mathcal E$.

\begin{theorem}\label{thm:nonpoly_five_affine}
Let $\sigma:\mathbb R\to\mathbb R$ be continuous and nonpolynomial, and let
\[
A_0=
\Bigl\{
t\mapsto 0,\quad
t\mapsto 1,\quad
t\mapsto t,\quad
t\mapsto -t,\quad
t\mapsto t/2
\Bigr\}.
\]
Then, for every compact set $K\subset\mathbb R^n$, the class
$\mathcal K_{A_0\cup\{\sigma\}}(K)$ is dense in $C(K)$.
\end{theorem}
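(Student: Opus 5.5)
The idea is to reduce Theorem~\ref{thm:nonpoly_five_affine} to the classical universal approximation theorem for single-hidden-layer MLPs (Leshno et al.~\cite{Leshno1993}), as in the proof of Theorem~\ref{thm:kan_fixed_sigma_affine}. The difference is that only \emph{dyadic} weights and biases can be realized exactly with the edge family $A_0\cup\{\sigma\}$. Density of the dyadic rationals then closes the gap.

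\textbf{Step 1: exact realization of dyadic affine maps.} I will show that for every unit value $u(x)$ already present in a network, and every dyadic rational $q=m/2^p$, the functions $q\,u(x)$ and the constant $q$ are realizable at some later layer. The tools are as follows.
\begin{itemize}
\item \emph{Copying.} A unit $u$ can be copied into any number of units of the next layer using the edge $t\mapsto t$ from $u$ and the zero edge from all other units.
\item \emph{Integer multiples.} A node receives only one edge from each unit, so to get $m u$ one first makes $|m|$ copies of $u$ in one layer. In the next layer a single node sums them with edges $t\mapsto t$, or with $t\mapsto -t$ when $m<0$.
\item \emph{Division by $2^p$.} This is done by $p$ successive layers, each a single edge $t\mapsto t/2$ (zero edges elsewhere).
\item \emph{Constants.} The edge $t\mapsto 1$ out of any unit produces the constant $1$. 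Summing copies and halving then gives any dyadic constant.
\end{itemize}
Combining these with the parallel-arrangement and depth-equalization device from the proof of Theorem~\ref{thm:poly_edge_uat} (identity edges pad shorter branches), every map $x\mapsto w\cdot x+b$ with $w\in D^n$, $b\in D$, where $D$ denotes the dyadic rationals, is realizable. Finitely many such maps can be realized in parallel, all ending at the same layer.

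\textbf{Step 2: dyadic MLPs.} Take finitely many such affine maps $w_k\cdot x+b_k$, all available at one common layer. Apply the edge $\sigma$ to each, one unit per map, with zero edges from the other units. Then form $\sum_k c_k\,\sigma(w_k\cdot x+b_k)$ with $c_k\in D$, again by the copy/sum/halve mechanism and ending with an identity output edge. Hence every single-hidden-layer MLP with activation $\sigma$ and dyadic parameters lies in $\mathcal K_{A_0\cup\{\sigma\}}(K)$.

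\textbf{Step 3: density.} Let $f\in C(K)$ and $\varepsilon>0$. By \cite{Leshno1993}, since $\sigma$ is continuous and nonpolynomial, there is $G(x)=\sum_{k=1}^N c_k\sigma(w_k\cdot x+b_k)$ with real parameters and $\|f-G\|_{C(K)}<\varepsilon/2$. Choose dyadic parameters $\tilde w_k,\tilde b_k,\tilde c_k$ close to the real ones. Since $K$ is bounded, the arguments $\tilde w_k\cdot x+\tilde b_k$ stay in a fixed compact interval $I$ and are uniformly close to $w_k\cdot x+b_k$ on $K$. Uniform continuity of $\sigma$ on $I$, together with boundedness of $\sigma$ on $I$ (to handle the perturbation of $c_k$), gives a dyadic network $\tilde G$ with $\|G-\tilde G\|_{C(K)}<\varepsilon/2$.

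I expect the main obstacle to be Step 1. It is conceptually easy, but the bookkeeping needs care. The constraint of one edge per pair of units forces duplication before summation, and the branches must be kept aligned in depth using only the edges $t\mapsto t$ and $0$. One must also check that the constant edge $t\mapsto 1$ yields the value $1$ regardless of its input, so that constants can be injected at any layer.
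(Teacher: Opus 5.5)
Your proposal is correct and follows essentially the same route as the paper. You realize dyadic affine maps exactly from $A_0$ by copying, integer summation, sign change, repeated halving and the constant edge $t\mapsto 1$, then realize shallow $\sigma$-networks with dyadic parameters exactly, and conclude via the classical theorem of Leshno et al.\ together with uniform continuity of $\sigma$ on a compact interval and density of the dyadic rationals. The differences are only cosmetic, such as halving after summing rather than before; your remarks about duplicating units before summation and suppressing unwanted inputs with zero edges make explicit what the paper uses implicitly.
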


\begin{proof}
Let
\[
\mathbb D=\left\{\frac{m}{2^r}: m\in\mathbb Z,\ r\in\mathbb N\cup\{0\}\right\}
\]
denote the set of dyadic rational numbers.

We first claim that every affine map
\[
t\mapsto qt+b,
\qquad q,b\in\mathbb D,
\]
is exactly representable by a scalar-input deep KAN whose edge
functions belong to $A_0$.

We begin by constructing dyadic linear maps. Consider a scalar-input KAN
with $r$ hidden layers, each consisting of a single node, and with edge
function $t\mapsto t/2$ at every layer. Then the network computes
\[
t\mapsto \frac{t}{2^r}.
\]
Denote this map by $u(t)=t/2^r$.

We next construct
\[
t\mapsto \frac{m}{2^r}t.
\]
If $m\ge 0$, we create $m$ copies of $u(t)$ using a layer whose incoming
edge functions are all the identity map, and then sum these copies at a
single node, again using identity edge functions. This yields
\[
m\,u(t)=\frac{m}{2^r}t.
\]
If $m<0$, we first construct $(|m|/2^r)t$ as above and then apply the edge
function $t\mapsto -t$. Thus, for every $m\in\mathbb Z$ and $r\ge 0$, the map
\[
t\mapsto \frac{m}{2^r}t
\]
is representable.

We now turn to dyadic constant maps, that is,
\[
t\mapsto b,
\qquad b\in\mathbb D.
\]

First consider integer constants. Fix $m\in\mathbb Z$. If $m\ge 0$, we take
one hidden layer consisting of $m$ nodes, each receiving the input through
the edge function $t\mapsto 1$. Thus each node computes the constant function
$1$. We then add one further node whose incoming edges from these $m$ nodes
are all the identity map, so that this node computes
\[
1+\cdots+1 = m.
\]
If $m<0$, we first construct the constant map $t\mapsto |m|$ and then add one
more layer with a single node whose incoming edge function is $t\mapsto -t$.
This yields the constant map $t\mapsto m$.

Next, for any $r\ge 0$, we apply $r$ successive layers, each consisting of a
single node with incoming edge function $t\mapsto t/2$. This transforms the
constant map $t\mapsto m$ into
\[
t\mapsto \frac{m}{2^r}.
\]
Thus every dyadic constant map is representable.

Finally, we combine the two constructions. We have shown that for any
$q\in\mathbb D$ the map $t\mapsto qt$ is representable, and for any
$b\in\mathbb D$ the map $t\mapsto b$ is representable. To construct
\[
t\mapsto qt+b,
\]
we place the corresponding subnetworks in parallel so that their outputs are
available at the same layer (for this procedure, see the proof of 
Theorem~\ref{thm:poly_edge_uat}), and then add one further node whose incoming
edges from these outputs are both the identity map. This node computes
\[
t\mapsto qt + b.
\]

Thus every dyadic affine map is exactly representable by a scalar-input deep
KAN whose edge functions belong to $A_0$.

We next show that every single-hidden-layer neural network with dyadic parameters
can be exactly realized by a deep KAN whose edge functions belong to
$A_0\cup\{\sigma\}$. Consider
\begin{equation}\label{eq:dyadic_shallow}
G(x)=\sum_{k=1}^N c_k\,\sigma(w_k\cdot x+b_k),
\end{equation}
where
\[
w_k=(w_{k1},\dots,w_{kn})\in\mathbb D^n,
\qquad
b_k,c_k\in\mathbb D.
\]

Fix $k\in\{1,\dots,N\}$. We first construct the map
\[
x\mapsto w_k\cdot x+b_k
\]
using a KAN with edge functions in $A_0$.

For each $j=1,\dots,n$, the map
\[
x_j \mapsto w_{kj}x_j
\]
is representable by a scalar-input KAN with edge functions in $A_0$,
since $w_{kj}\in\mathbb D$. Likewise, the constant map
\[
x\mapsto b_k
\]
is representable using only functions from $A_0$.

We place these subnetworks in parallel so that the outputs
\[
w_{k1}x_1,\ \dots,\ w_{kn}x_n,\ b_k
\]
are available at the same layer. We then add one further node whose incoming
edges from these outputs are all the identity map. This node computes
\[
x\mapsto \sum_{j=1}^n w_{kj}x_j + b_k
=
w_k\cdot x + b_k.
\]

Next, we apply the function $\sigma$ by introducing one additional layer
consisting of a single node whose incoming edge from this quantity is
$t\mapsto \sigma(t)$. This yields
\[
x\mapsto \sigma(w_k\cdot x + b_k).
\]

We now construct
\[
x\mapsto c_k\,\sigma(w_k\cdot x + b_k).
\]
Since $c_k\in\mathbb D$, write $c_k = m/2^r$ with $m\in\mathbb Z$ and $r\ge 0$.
Starting from the function
\[
x\mapsto \sigma(w_k\cdot x + b_k),
\]
we apply $r$ successive layers with edge function $t\mapsto t/2$ to obtain
\[
x\mapsto \frac{1}{2^r}\sigma(w_k\cdot x + b_k).
\]
If $m\ge 0$, we construct $m$ copies of this function and sum them at a single
node using identity edge functions, which yields
\[
x\mapsto \frac{m}{2^r}\sigma(w_k\cdot x + b_k).
\]
If $m<0$, we first construct
\[
x\mapsto \frac{|m|}{2^r}\sigma(w_k\cdot x + b_k)
\]
in the same way, and then apply the edge function $t\mapsto -t$.
Thus the function $x\mapsto c_k\,\sigma(w_k\cdot x + b_k)$ is representable.

Finally, we combine the $N$ subnetworks corresponding to $k=1,\dots,N$. We
place them in parallel so that the outputs
\[
c_1\,\sigma(w_1\cdot x + b_1),\ \dots,\
c_N\,\sigma(w_N\cdot x + b_N)
\]
are available at the same layer, and then add one further node whose incoming
edges from these outputs are all the identity map. This node computes
\[
x\mapsto \sum_{k=1}^N c_k\,\sigma(w_k\cdot x + b_k),
\]
which coincides with \eqref{eq:dyadic_shallow}.

Thus every function of the form \eqref{eq:dyadic_shallow} is exactly
representable by a deep KAN whose edge functions belong to
$A_0\cup\{\sigma\}$.

We now prove density in $C(K)$. Let $f\in C(K)$ and let $\varepsilon>0$ be
given. Since $\sigma$ is continuous and nonpolynomial, it follows from the
classical universal approximation theorem \cite{Leshno1993} that there exist
$N\in\mathbb N$, vectors $w_k\in\mathbb R^n$, and scalars
$b_k,c_k\in\mathbb R$ such that
\[
H(x)=\sum_{k=1}^N c_k\,\sigma(w_k\cdot x+b_k)
\]
satisfies
\[
\sup_{x\in K}|f(x)-H(x)|<\frac{\varepsilon}{2}.
\]

The function $H(x)$ depends continuously on the parameters $w_k$, $b_k$, 
and $c_k$, uniformly on $K$. Indeed, since $K$ is compact, 
there exists $R>0$ such that
\[
\|x\|\le R \qquad \text{for all } x\in K.
\]
For fixed $k$, if $w_k'\to w_k$ and $b_k'\to b_k$, then for every $x\in K$,
\[
\bigl|(w_k'\cdot x+b_k')-(w_k\cdot x+b_k)\bigr|
\le R\|w_k'-w_k\| + |b_k'-b_k|.
\]
Hence
\[
\sup_{x\in K}\bigl|(w_k'\cdot x+b_k')-(w_k\cdot x+b_k)\bigr|\to 0.
\]

Since $w_k'\to w_k$ and $b_k'\to b_k$, the quantities
\[
w_k'\cdot x+b_k' \quad \text{and} \quad w_k\cdot x+b_k
\]
remain in a fixed compact interval $I\subset\mathbb R$ for all $x\in K$ and all
parameters sufficiently close to $(w_k,b_k)$. Since $\sigma$ is continuous, it
is uniformly continuous on $I$. Therefore
\[
\sup_{x\in K}
\left|
\sigma(w_k'\cdot x+b_k')-\sigma(w_k\cdot x+b_k)
\right|
\to 0.
\]

It follows that, if in addition $c_k'\to c_k$, then
\[
\sup_{x\in K}
\left|
c_k'\sigma(w_k'\cdot x+b_k')-c_k\sigma(w_k\cdot x+b_k)
\right|
\to 0.
\]
Summing over $k=1,\dots,N$, we conclude that the map
\[
(w_k,b_k,c_k)_{k=1}^N
\mapsto
\sum_{k=1}^N c_k\,\sigma(w_k\cdot x+b_k)
\]
depends continuously on the parameters in the uniform norm on $K$.

Since the dyadic rationals are dense in $\mathbb R$, we may choose dyadic
approximations
\[
w_k'\in\mathbb D^n,\qquad b_k',c_k'\in\mathbb D
\]
such that the corresponding dyadic-parameter network
\[
H'(x)=\sum_{k=1}^N c_k'\,\sigma(w_k'\cdot x+b_k')
\]
satisfies
\[
\sup_{x\in K}|H(x)-H'(x)|<\frac{\varepsilon}{2}.
\]

By the previous part of the proof, $H'$ is exactly representable by a deep KAN
whose edge functions belong to $A_0\cup\{\sigma\}$. Hence
\[
\sup_{x\in K}|f(x)-H'(x)|
\le
\sup_{x\in K}|f(x)-H(x)|
+
\sup_{x\in K}|H(x)-H'(x)|
<
\varepsilon.
\]

Since $f$ and $\varepsilon$ were arbitrary, the KAN class generated by
$A_0\cup\{\sigma\}$ is dense in $C(K)$.
\end{proof}

Note that in the above theorem the role of the family $A_0$ is 
to generate, by composition and summation,
all affine maps with dyadic coefficients within the Kolmogorov--Arnold 
network architecture. This makes it possible to realize, inside this class of 
networks, all shallow $\sigma$-networks with dyadic parameters exactly. 
In combination with the classical universal approximation theorem and 
the density of dyadic numbers, this explains 
the resulting universality on compact sets.

Each function in the family
\[
A_0=\{0,1,t,-t,t/2\}
\]
has a specific role in the construction.
The zero function suppresses unwanted connections between layers.
The constant function $t\mapsto1$ is used to construct constant maps.
The identity map $t\mapsto t$ preserves and copies quantities unchanged.
The map $t\mapsto -t$ produces sign changes, while the map
$t\mapsto t/2$ generates scalings by 
$2^{-r}$. Combined with the summation structure of the network,
this yields affine maps whose coefficients are dyadic rationals.

\medskip

\textbf{Remark.} The use of dyadic rationals in the proof is not essential. They arise naturally 
from the choice of the affine family
\[
A_0=\{0,1,t,-t,t/2\}.
\]

More generally, one may replace the map $t\mapsto t/2$ by $t\mapsto t/q$, 
where $q\ge2$ is a fixed integer. In this case the proof works with the set
\[
\mathbb D_q
=
\left\{
\frac{m}{q^r}:m\in\mathbb Z,\ r\in\mathbb N\cup\{0\}
\right\},
\]
which is again dense in $\mathbb R$, and with the affine family
\[
\{0,1,t,-t,t/q\}.
\]

\medskip

The following theorem provides a characterization of universality for 
Kolmogorov--Arnold networks, showing that non-affinity of the distinguished 
edge function $\sigma$ is both necessary and sufficient when combined 
with a suitable finite family of affine edge functions.

\begin{theorem}\label{thm:finite_affine_nonaffine}
Let $\sigma:\mathbb R\to\mathbb R$ be continuous. Then the following are
equivalent:
\begin{itemize}
\item[(i)] $\sigma$ is non-affine;
\item[(ii)] there exists a finite family $A_\sigma$ of affine functions on
$\mathbb R$ such that, for every compact set $K\subset\mathbb R^n$, the class
$\mathcal K_{A_\sigma\cup\{\sigma\}}(K)$ is dense in $C(K)$.
\end{itemize}
\end{theorem}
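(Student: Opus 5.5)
The direction (ii)$\Rightarrow$(i) is immediate and will be handled first. If $\sigma$ is affine, then every edge function in $A_\sigma\cup\{\sigma\}$ is affine. Hence every network in $\mathcal K_{A_\sigma\cup\{\sigma\}}(K)$ computes an affine function of $x$, as in the proofs of Theorems~\ref{thm:kan_fixed_sigma_affine} and \ref{thm:kan_iff}, and cannot be dense in, e.g., $C([0,1]^n)$.

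For (i)$\Rightarrow$(ii), split into two cases according to whether $\sigma$ is a polynomial.

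\textbf{Nonpolynomial case.} Take $A_\sigma=A_0$. The conclusion is exactly Theorem~\ref{thm:nonpoly_five_affine}.

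\textbf{Polynomial case, $\deg\sigma=d\ge 2$.} Rerun the Stone--Weierstrass argument of Theorem~\ref{thm:poly_edge_uat}, but using only finitely many affine maps. Write $\sigma(t)=a_dt^d+\dots+a_0$. Fix $h=1$ and set
\[
q=\Delta_1^{d-2}\sigma=At^2+Bt+C,\qquad A\neq 0.
\]
The plan is to choose
\[
A_\sigma=A_0\cup\{t\mapsto t+1,\ t\mapsto A^{-1}t,\ t\mapsto -A^{-1}Bt,\ t\mapsto -A^{-1}C\}.
\]
(The map $t+1$ is even superfluous, since shifts by integers can be built from $t$ and the constant $1$.) With these maps:
\begin{itemize}
\item[(a)] As in Theorem~\ref{thm:nonpoly_five_affine}, all dyadic affine maps $t\mapsto \alpha t+\beta$, $\alpha,\beta\in\mathbb D$, are exactly representable by scalar-input networks with edges in $A_0$.
\item[(b)] The parallel-composition and stacking constructions from Theorem~\ref{thm:poly_edge_uat} use only the zero map and the identity, plus an additive constant that we now build as a parallel subnetwork. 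Therefore the represented class is closed under sums, under multiplication by dyadic scalars, and under composition with scalar-input networks.
\item[(c)] The quadratic $q(t)=\sum_{r=0}^{d-2}c_r\sigma(t+r)$ has integer coefficients $c_r$ and integer shifts, so it is representable by (a). Then
\[
t^2=A^{-1}q(t)-A^{-1}Bt-A^{-1}C
\]
is obtained by feeding $q(t)$, $t$ and any node through the three extra edges and summing with identity edges.
\item[(d)] Polarization $fg=\tfrac12\bigl((f+g)^2-f^2-g^2\bigr)$ uses only $t/2$, $-t$ and the identity, so the class is closed under products.
\end{itemize}
Hence the class is an algebra over $\mathbb D$, i.e., closed under sums, products and dyadic scalar multiples, and it contains $1$ and the coordinate functions $x_j$. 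It therefore contains every polynomial with dyadic coefficients. These polynomials are dense in the real polynomials in sup norm on the compact set $K$, by approximating each coefficient. Weierstrass then gives density in $C(K)$.

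The main obstacle is bookkeeping rather than ideas. Stone--Weierstrass requires a real subalgebra, but we only have closure under dyadic scalars, so the density step must be argued directly through dyadic-coefficient polynomials as above, not by citing Stone--Weierstrass verbatim. We must also check that every auxiliary operation used earlier (depth equalization, adding constants, copying nodes) is realized with edges from the finite family. Depth equalization uses identity edges. Constants come from the edge $t\mapsto 1$ applied to any node, followed by identity edges.
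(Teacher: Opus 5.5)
Your proof is correct and follows the same route as the paper. In the nonpolynomial case you take $A_0$ via Theorem~\ref{thm:nonpoly_five_affine}; in the polynomial case you enlarge $A_0$ by the finitely many affine maps needed to realize $t\mapsto t^2$ from $\Delta_h^{d-2}\sigma$, then use polarization and the density of dyadic-coefficient polynomials. Fixing $h=1$ is a small economy over the paper's $B_\sigma$, because the shifts $t+r$ and the integer weights $c_r$ are then already generated by $A_0$, so only the three reconstruction maps need to be added.
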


\begin{proof}
Assume first that \((ii)\) holds. If $\sigma$ were affine, then every function
in $A_\sigma\cup\{\sigma\}$ would be affine. It would follow that every network
in the corresponding class is affine, which is impossible if the class is dense
in $C(K)$ for every compact set $K\subset\mathbb R^n$. Thus $\sigma$ must be
non-affine. This proves \((ii)\Rightarrow(i)\).

We now prove \((i)\Rightarrow(ii)\). Assume that $\sigma$ is continuous and
non-affine.

If $\sigma$ is nonpolynomial, then the conclusion follows from
Theorem~\ref{thm:nonpoly_five_affine} by taking
\[
A_\sigma=A_0.
\]

If $\sigma$ is a polynomial of degree at least $2$, then by the proof of
Theorem~\ref{thm:poly_edge_uat}, the squaring map $t\mapsto t^2$ can be
constructed from $\sigma$ using affine maps from a finite family. These consist
of translations and scalar multiples arising in the finite-difference
representation of $\Delta_h^{d-2}\sigma$, together with the affine maps used in
the final reconstruction of $t^2$ from the quadratic polynomial
$q(t)=\Delta_h^{d-2}\sigma(t)$. Let $B_\sigma$ denote this finite
family, and set
\[
A_\sigma := A_0 \cup B_\sigma,
\]
where
\[
A_0=
\Bigl\{
t\mapsto 0,\quad
t\mapsto 1,\quad
t\mapsto t,\quad
t\mapsto -t,\quad
t\mapsto t/2
\Bigr\}.
\]

Since $A_0\subset A_\sigma$, the dyadic affine maps are exactly representable
by scalar-input deep KANs with edge functions in $A_\sigma$ 
(see the proof of Theorem~\ref{thm:nonpoly_five_affine}). Note that this 
construction does not depend on any specific properties of $\sigma$.
On the other hand, the proof of Theorem~\ref{thm:poly_edge_uat} shows that,
using only functions from $B_\sigma\cup\{\sigma\}\subset A_\sigma\cup\{\sigma\}$,
one can exactly realize the squaring map. Hence, using only functions from
$A_\sigma\cup\{\sigma\}$, one can realize multiplication via the polarization
identity
\[
fg=\frac{(f+g)^2-f^2-g^2}{2},
\]
and therefore construct exactly all multivariate polynomials with dyadic
coefficients.

Finally, polynomials with dyadic coefficients are dense in $C(K)$ for every
compact set $K\subset\mathbb R^n$, since ordinary polynomials are dense by the
Stone--Weierstrass theorem and real coefficients may be approximated
arbitrarily well by dyadic numbers. Therefore the resulting network class is
dense in $C(K)$.

Thus \((ii)\) holds in both cases, and the proof is complete.
\end{proof}

Theorem~\ref{thm:finite_affine_nonaffine} shows that the full affine family in
Theorem~\ref{thm:kan_iff} can always be reduced to a finite one. In the
nonpolynomial case, the explicit five-element family $A_0$ already suffices.

Note that in Theorem~\ref{thm:finite_affine_nonaffine} the finite family of 
affine functions $A_\sigma$ depends on $\sigma$. It remains an interesting open 
problem to determine a minimal finite affine
family $A$, independent of $\sigma$, with the following property. For every
continuous non-affine function $\sigma$ and every compact set
$K\subset\mathbb R^n$, the class of Kolmogorov--Arnold networks with
edge functions in $A\cup\{\sigma\}$ is dense in $C(K)$. It is also natural to
ask whether the number five, the number of affine functions in the nonpolynomial case 
(Theorem~\ref{thm:nonpoly_five_affine}), can be reduced.

\end{document}